\documentclass{article} 
\usepackage{times}
\usepackage{amsfonts}
\usepackage{amsmath}
\usepackage{algorithm}
\usepackage{algorithmic}
\usepackage{graphicx} 
\usepackage{subfigure} 
\usepackage{color}
\usepackage{authblk}

\usepackage{natbib}

\usepackage{amsthm}
\newtheorem{proposition}{Proposition}

\newif\ifenoughspace
\enoughspacefalse

\title{Bounding the Test Log-Likelihood of Generative Models}

\author[1,2]{Yoshua Bengio\thanks{yoshua.bengio@umontreal.ca}}
\author[2]{Li Yao\thanks{li.yao@umontreal.ca}}
\author[3]{Kyunghyun Cho\thanks{kyunghyun.cho@aalto.fi}}

\affil[1]{CIFAR Senior Fellow}
\affil[2]{D\'{e}partement d'Informatique et de Recherche Op\'{e}rationelle \\
Universit\'{e} de Montr\'{e}al}
\affil[3]{Department of Information and Computer Science \\ Aalto University School of Science}

%





\begin{document}

\maketitle

\begin{abstract}
Several interesting generative learning algorithms involve a
complex probability distribution over many random variables,
involving intractable normalization constants or latent variable
marginalization. Some of them may not have even an analytic
expression for the unnormalized probability function and no
tractable approximation. This makes it difficult to estimate the
quality of these models, once they have been trained, or to
monitor their quality (e.g.  for early stopping) while training.
A previously proposed method is based on constructing a
non-parametric density estimator of the model's probability
function from samples generated by the model. We revisit this
idea, propose a more efficient estimator, and prove that it
provides a lower bound on the true test log-likelihood and an
unbiased estimator as the number of generated samples goes to
infinity, although one that incorporates the effect of poor
mixing. We further propose a biased variant of the estimator that
can be used reliably with a finite number of samples for the
purpose of model comparison.
\end{abstract}

\section{Motivating Sampling-Based Estimators of Generative
Models' Quality}

Since researchers have 
begun considering more and more powerful
models of data distributions, they have been facing the
difficulty of estimating the quality of these models. 

In some cases, the probability distribution of a model involves
many latent variables, and it is intractable to marginalize over
those latent variables or to compute the normalization constant
(partition function). 
There exist approximation algorithms that were proposed to
overcome these intractabilities. 
One such example is Annealed Importance Sampling 
\citep[AIS,][]{Neal-2001,Salakhutdinov+Murray-2008,MurraySal09,Salakhutdinov+Hinton-2009}.
AIS, however, tends to provide optimistic estimates most of the
time, just like any estimator based on importance sampling.
This optimistic
estimation happens 
when the samples from the AIS proposal distribution
miss many important modes of the distribution.
This is
problematic because when we want to compare learning algorithms,
we 
often prefer a conservative 
estimate of
performance than an optimistic estimate that tends to over-estimate
the value of the model. This issue can be particularly troubling
when the amount of over-estimation depends on the model, 
which makes model comparisons based on an optimistic estimator dangerous. 

In other cases, one has a generative model but there is 
no explicit formulat corresponding to
the probability
function estimated by the model. That includes
Herding~\cite{WellingUAI2009}, the non-zero temperature version
of Herding~\citep{Breuleux+Bengio-2011}, and the recently
proposed generative procedures for contractive
auto-encoders~\citep[CAE,][]{Rifai-icml2012}, denoising
auto-encoders~\citep[DAE,][]{Bengio-et-al-NIPS2013}, and generative
stochastic networks~\citep[GSN,][]{Bengio+Laufer-arxiv-2013}.

For this reason, in this paper, we discuss a way to 
assess the quality of a
generative model simply by considering the samples it generates.
In the next section, we discuss the general idea of {\em
estimating a probability function} (or a density function) from
the samples generated by a generative model.
We first review a previously proposed estimator that aimed to solve
this goal. 
We show that the estimate by this estimator, in expectation over the generated
samples from a generative model, is a lower bound on the true
test log-likelihood and unbiased asymptotically.
We then propose a more efficient variant of this estimator that
has lower variance. 

\section{Previous Work}
\label{sec:previous_work}

As far as we know, \citet{Breuleux+al-TR-2010} and
\citet{Breuleux+Bengio-2011} first proposed
this kind of estimator.
The estimator computes the estimate of a geneartive model by the
following three steps:
\begin{enumerate} 
    \item Generate a set of samples $S$ from the model.
    \item Construct a non-parametric estimator $\hat{f}$ of the probability
    distribution $f$ of the model
    distribution.
    \item Compute the log-likelihood of test data under $\hat{f}$.
\end{enumerate}

In the case where the data are continuous, a Parzen density
estimator is constructed by
\begin{equation}
\hat{f}(x) = {\rm mean}_{x' \in S} {\cal N}(x;\mu=x',\sigma I),
\end{equation}
where $S$ is a set of samples from the model collected by a
Markov chain, and ${\cal N}(x;\mu,\Sigma)$ is the probability
density of $x$ under a Gaussian distribution with mean $\mu$ and
covariance $\Sigma$. In this case, the bandwidth hyper-parameter
$\sigma$ must 
be tuned according to, for instance, the log-likelihood of a
validation set.


This estimator 
was 
recently used to assess the qualities of 
the generative models such as 
stacked CAE~\citep{Rifai-icml2012-small}, 
restricted Boltzmann
machines~\citep[RBM,][]{Desjardins+al-2010-small}, deep belief
networks~\citep{Bengio-et-al-ICML2013} as well as
DAEs~\citep{Bengio-et-al-arxiv-2013v1} and
GSNs~\citep{Bengio+Laufer-arxiv-2013}.

As noted in~\citet{Breuleux+al-TR-2010}, such an estimator 
{\em measures not only the quality of the model but also that of
the generative procedure.} Any variant of this estimator will
tend to {\em estimate the log-likelihood to be lower than the
    true one when the generative procedure used to collecte
samples from a model does not mix well.} 


Another way to evaluate the quality of a generative model whose
probability is neither tractable nor easily approximated is to
use a non-parametric two-sample test~\citep{Gretton-et-al-2012}.
Unlike the approach by \citet{Breuleux+al-TR-2010}, this approach 
compares the (smoothed)
distribution of the generated samples 
to test samples 
using an $L_2$ measure (the squared error in estimated
probability).
Since in this paper we are more interested in the case of using
Kullback-Leibler divergence as a measure, we do not discuss this
approach of two-sample test any further.

\section{Conservative Sampling-based Likelihood Estimator}

We propose in this section a new estimator of the log-likelihood
of a generative model, called {\em Conservative Sampling-based
  Log-likelihood} (CSL) estimator. The proposed CSL estimator
  does not require tuning a non-parametric estimator to fit 
  samples generated from a model. It only requires that a Markov
  chain is defined for the model and used to collect samples from
  the generative model. Furthermore, we assume that the Markov
  chain alternatively samples from latent variables and observed
  variables such that the conditional distribution 
  $P(x|h)$ is well defined.

This assumption holds for many widely used generative models. An
RBM using a block Gibbs sampling is one, and a generalized
denoising autoencoder whose sampling procedure was proposed
recently by \citet{Bengio-et-al-NIPS2013-small} is another.
Multi-layered generative models such as DBNs and deep Boltzmann
machines \citep[DBM,][]{Salakhutdinov+Hinton-2009}.
  

Given the conditional probability 
$P(x|h)$ of a model and a set $S$ of samples $h'$ of the latent
variables collected from a Markov chain, the CSL estimate is
computed by
\begin{align}
\label{eq:CSL}
     \log \hat{f}(x) = \log {\rm mean}_{h' \in S} P(x|h').
\end{align}
The overall procedure of the CSL estimator is presented in
Alg.~\ref{alg:CSL}.

\begin{algorithm}[ht]
\caption{{\sc CSL}
\sl requires a set $S$ of samples of the latent variables $h'$
from a Markov chain,
a 
conditional distribution
$P(x|h)$,
and a set ${\cal X}$ of test samples.
}
\begin{algorithmic}[1]
\label{alg:CSL}
\STATE $LL=0$
\FOR{$x$ in ${\cal X}$}
\STATE $r=0$
\FOR{$h'$ in $S$}
  \STATE $r \leftarrow r + P(x|h')$
\ENDFOR
\STATE $\hat{f}_S(x)=\frac{r}{|S|}$
\STATE $LL \leftarrow LL + \log \hat{f}_S(x)$
\ENDFOR
\STATE Return 
$LL/|{\cal X}|$
\end{algorithmic}
\end{algorithm}

Unlike the original estimator described in
Sec.~\ref{sec:previous_work}, the CSL estimator utilizes the
conditional probability $P(x|h')$ rather than the actual sample
$x'$ of observed variables generated from the Markov chain. 
This has the effect of 
considerably reducing the variance of the CSL estimator. In the case of
a Gaussian conditional $P(x|h')$, whose mean $\mu$ is a function
of $h'$, for instance, 
this has the consequence of 
centering the Gaussian components of the Parzen
density estimator on the mean $\mu'$ 
rather than on the actual sample $x'$.
Since each mean $\mu'$ can ``summarize'' a very large number of
potential samples $x'$ (one could have obtained by fixing $h'$
and considering many draws from $P(x|h')$), the CSL estimator is
a much more efficient estimator with lower variance than other
estimators obtained purely from the generated samples, such as
the one described in previous section.

The other important consequence of using the conditional
distribution of the observed variables is that 
it allows us to {\em get rid of 
the bandwidth 
hyper-parameter}. Indeed, a natural choice of bandwidth (in the
case of Gaussian conditional $P(x|h')$) is precisely the standard
deviation of the Gaussian conditional distribution.  
This 
allows us to prove that the CSL
estimator is asymptotically consistent and conservative in
average later.

\section{Asymptotically Unbiased, Conservative Estimator}
\label{sec:asym}

We first prove that the CSL estimator $\log \hat{f}_S(x)$ is
asymptotically unbiased, i.e., that as the number of generated
samples increases, it approaches the ground truth log-likelihood
$\log f(x)$ associated with the stationary distribution of a
generating Markov chain.

\begin{proposition}
If the samples in $S$ are taken from chains of length $L
\rightarrow \infty$,
the CSL estimator $\log \hat{f}_S(x)$ (Algorithm~\ref{alg:CSL}, Eq.~\ref{eq:CSL})
converges to the ground truth probability $f(x)$ as the number
of samples $|S| \rightarrow \infty$, i.e., 
\begin{equation}
  \lim_{|S|\rightarrow \infty} \log \hat{f}_S(x) = \log f(x)
\end{equation}
\end{proposition}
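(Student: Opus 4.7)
The plan is to identify the sample mean inside the logarithm as a Monte Carlo estimate of an expectation that exactly equals $f(x)$, and then appeal to the law of large numbers together with continuity of $\log$.

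First, I would use the assumption that the Markov chain alternates between sampling $h$ and sampling $x\sim P(x\mid h)$ and that its stationary distribution $\pi(x,h)$ satisfies $\pi(x,h)=P(x\mid h)\,\pi(h)$, so its marginal on the observed variables is $f(x)=\sum_h P(x\mid h)\,\pi(h) = \mathbb{E}_{h\sim\pi}\!\left[P(x\mid h)\right]$. This rewrites the target $f(x)$ as a plain expectation of the function $h\mapsto P(x\mid h)$ under the stationary marginal $\pi(h)$ of the latent variables.

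Next, under the hypothesis $L\to\infty$, the marginal law of each sample $h'\in S$ coming out of the chain converges to $\pi(h)$. So as $|S|\to\infty$ the empirical average
\[
\hat{f}_S(x)=\frac{1}{|S|}\sum_{h'\in S} P(x\mid h')
\]
is a Monte Carlo estimate of $\mathbb{E}_{h\sim\pi}[P(x\mid h)]$. By the (strong) law of large numbers, if the $h'$ are taken from independent chains, or by the Birkhoff ergodic theorem if they come from a single sufficiently mixed chain, $\hat{f}_S(x)\to f(x)$ almost surely. Because $P(x\mid h')\ge 0$ and at least one term is positive (so that $f(x)>0$ for any $x$ in the support), $\log$ is continuous at the limit and we conclude $\log \hat{f}_S(x)\to\log f(x)$.

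The main obstacle, and the thing that needs care, is the interplay between the two limits $L\to\infty$ and $|S|\to\infty$: the samples from a Markov chain are generally neither independent nor exactly marginally $\pi$-distributed, so one has to decide whether to regard $S$ as coming from a batch of independent long chains (in which case standard LLN suffices once $L$ is large enough to make the bias in each $P(x\mid h')$ vanish) or from one long run (in which case ergodicity of the chain must be invoked to justify the time-average converging to the ensemble average). A minor but related point is ensuring strict positivity of $\hat{f}_S(x)$ in the limit so that passing $\log$ through is legitimate; this follows from $P(x\mid h)>0$ on the chain's support, which is standard for the generative models considered (RBMs, DAEs, GSNs, DBMs).
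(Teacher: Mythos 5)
Your proposal is correct and follows essentially the same route as the paper's proof: recognize $f(x)=\mathbb{E}_{h\sim\pi}[P(x\mid h)]$ as the expectation being Monte-Carlo estimated, invoke the law of large numbers (or ergodicity) for convergence of the empirical mean, and pass the logarithm through the limit by continuity. Your added care about the dependence structure of the chain samples and the strict positivity needed to apply $\log$ is more explicit than the paper's argument, but it does not change the approach.
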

\begin{proof}
According to the hypothesis on the samples in $S$, we have that 
Monte-Carlo estimates obtained from $S$ converge to their expectation,
i.e., the distribution of $h'$ converges to $P(h')$ under the stationary
distribution of the Markov chain.
Since $f(x)=\int P(h') P(x|h') dh'$ is the marginal distribution
over $x$ associated with this stationary distribution, its Monte-Carlo
estimator $\hat{f}(x) = {\rm mean}_{h' \ in S} P(x|h')$ converges in
the limit of $|S|\rightarrow\infty$ to its expectation under $P(h')$, i.e., $f(x)$.
Finally, note that the log of the limit equals the limit of the log.
\end{proof}

We then prove that in the finite sample case, the CSL estimator $\log \hat{f}_S(x)$
tends to underestimate the ground truth log-probability $\log f(x)$.

\begin{proposition}
The expected value of the log of the CSL estimator $\log \hat{f}_S(x)$ (Algorithm~\ref{alg:CSL})
over samples $S$ from the generative model
is a lower bound on the true log-likelihood $\log f(x)$, i.e.,
\begin{equation}
  E_S[ \log \hat{f}_S(x) ] \leq \log f(x)
\end{equation}
\end{proposition}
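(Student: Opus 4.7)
The plan is to prove this by a direct application of Jensen's inequality to the concave function $\log$, combined with the fact that $\hat{f}_S(x)$ is an unbiased estimator of $f(x)$ when the samples in $S$ are drawn according to the stationary marginal $P(h')$.

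First, I would apply Jensen's inequality. Since $\log$ is concave, for any random variable $Y \geq 0$ we have $E[\log Y] \leq \log E[Y]$. Applying this with $Y = \hat{f}_S(x)$ yields
\begin{equation*}
  E_S[\log \hat{f}_S(x)] \;\leq\; \log E_S[\hat{f}_S(x)].
\end{equation*}
So the statement reduces to showing that $E_S[\hat{f}_S(x)] = f(x)$, i.e., that $\hat{f}_S(x)$ is an unbiased estimator of the true marginal at $x$.

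For the unbiasedness step, I would use linearity of expectation together with the assumption that each sampled $h' \in S$ is distributed (marginally) as $P(h')$, the latent marginal under the stationary distribution of the chain. Then
\begin{equation*}
  E_S[\hat{f}_S(x)] \;=\; \frac{1}{|S|}\sum_{h' \in S} E_{h' \sim P(h')}[P(x \mid h')] \;=\; \int P(h') P(x \mid h')\, dh' \;=\; f(x).
\end{equation*}
Combining with the Jensen bound gives $E_S[\log \hat{f}_S(x)] \leq \log f(x)$.

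The only real subtlety to flag is the same one raised in the preceding proposition: the unbiasedness step presumes that the samples $h'$ are drawn from the stationary marginal $P(h')$. In practice the chain has finite length $L$, so the samples are only approximately from $P(h')$ and the bound should be interpreted accordingly; as the authors already emphasize, a chain that mixes poorly will give an even lower estimate, reinforcing the conservative nature of the bound. Once that assumption is in place, though, the argument is a short two-line application of Jensen plus unbiasedness, and no further technical obstacle arises.
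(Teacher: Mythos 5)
Your proof is correct and follows essentially the same route as the paper's: Jensen's inequality on the concave $\log$, followed by the observation that $E_S[\hat{f}_S(x)] = E_H[P(x|H)] = f(x)$. Your explicit spelling-out of the unbiasedness step via linearity of expectation, and your remark that the bound strictly speaking requires the $h'$ to be drawn from the stationary marginal, are both welcome clarifications of what the paper leaves implicit.
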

\begin{proof}
We simply take advantage of the concavity of the log and Jensen's inequality:
\begin{eqnarray*}
  E_S[ \log \hat{f}_S(x) ] &\leq & \log E_S[ \hat{f}_S(x) ] \\
  &=& \log E_H[ P(x|H) ] \\
  &=& \log f(x)
\end{eqnarray*}
\end{proof}

Another interesting question is the rate of convergence of the CSL estimator
to its asymptotic (ground truth) value. That rate is governed by the
variance of the estimator, which decreases linearly with the number of
samples in $S$, up to a factor which corresponds to the {\em effective
sample size} associated with the Markov chain, just like any other
Monte-Carlo average associated with the chain.

\section{Empirical Validation}

In this section, we empirically evaluate the CSL estimator on a
real dataset to investigate the rate at which the estimator
converges.

We report here the experimental result on denoising auto-encoders (DAE), generative
stochastic networks (GSN), restricted Boltzmann machines (RBM),
deep Boltzmann machines (DBM), 
and deep belief nets (DBNs). DAEs and GSNs themselves define
generative (sampling) procedures, and for RBMs, DBMs and DBNs we
used block Gibbs sampling to generate samples of latent
variables.

One interesting aspect of these experiments is that they
highlight the dependency of the estimator on the effective sample
size of the Markov chain, i.e., on the mixing rate of the
chain.  For a fixed number of samples $|S|$, chains that mix
faster provide an estimator that is 
closer to its
asymptote. In particular, these results confirm the previously
reported observations of poor mixing rate of block Gibbs sampling
for RBMs, DBMs and DBNs that are very well trained. Indeed, these
models are able to capture a sharper estimated distribution than
their less-well trained counterparts.
However, Gibbs sampling on these less-well trained models tends
to mix better~\citep{Bengio-et-al-ICML2013}, because the major
modes of the learned distribution are not separated by vast zones
of tiny probability.

All models in these experiments were trained on the binarized
MNIST data (thresholding at 0.5).  The CSL estimates of the test
set on the following models were evaluated.  For each model,
every 100-th sample from a Markov chain was collected to compute
the CSL estimate. For more details on the architecture and
training procedure of each model, see Appendix~\ref{apx:models}.

Note that although on the RBM/DBN/DBM the testset log-likelihood
is estimated by AIS (or its lower bound), there is no such AIS
estimator for DAEs and GSNs, which is where the CSL estimator may
become 
more useful.  The number of generated samples was varied
between 10,000 and 150,000.  The resulting CSL estimates are
presented in
Table~\ref{tab:CSL}.

\begin{table}[ht]
\centering
\caption{The CSL estimates 
    obtained using different numbers of 
    samples of latent variables. 
Note that samples 
are collected once every 100 sampling steps of the Markov chain.
Where available, an AIS-based estimate is also shown.}
\label{tab:CSL}
\begin{tabular}{lrrrrr}
\hline
\# samples & GSN-1  & GSN-2 &  DBN-2 & DBM-2   & RBM \\
\hline
10k        &-142    & -108  & -446   &-173   & -233 \\
50k        &-126    & -101  & -370   &-144   & -192 \\
100k       &-120    & -98   & -340   &-135   & -177 \\
150k       &-117    & -97   & -325   &-132   & -170 \\
\hline \hline
AIS        &        &       &  -57   & -76.5 & -64.1 \\
\end{tabular}
\end{table}

In the following experiment, we trained an RBM with only 5 hidden units
on MNIST,
for which 
the exact log-likelihood can be computed easily.
In this case, we observed that the CSL estimate matched the AIS
and true likelihood closely as the number of samples grew. The
CSL estimates for varying numbers of generated samples are shown
in Table~\ref{tab:exact}. 
\begin{table}[ht]
\centering
\caption{The CSL estimate converges to the true loglikelihood on
a small RBM with only 5 hidden units. }
\label{tab:exact}
\begin{tabular}{lr}
\hline
\# of Samples & Log-likelihood \\  
\hline
1k         &-188.49    \\
2k         &-186.18    \\
5k         &-182.26    \\
10k        &-181.58    \\ 
20k        &-180.65    \\ 
30k        &-180.71    \\ 
\hline \hline
exact      &-180.24    \\
AIS        &-180.22    \\      
\end{tabular}
\end{table}

\begin{figure}[t]
    \centering
    \includegraphics[width=0.8\textwidth]{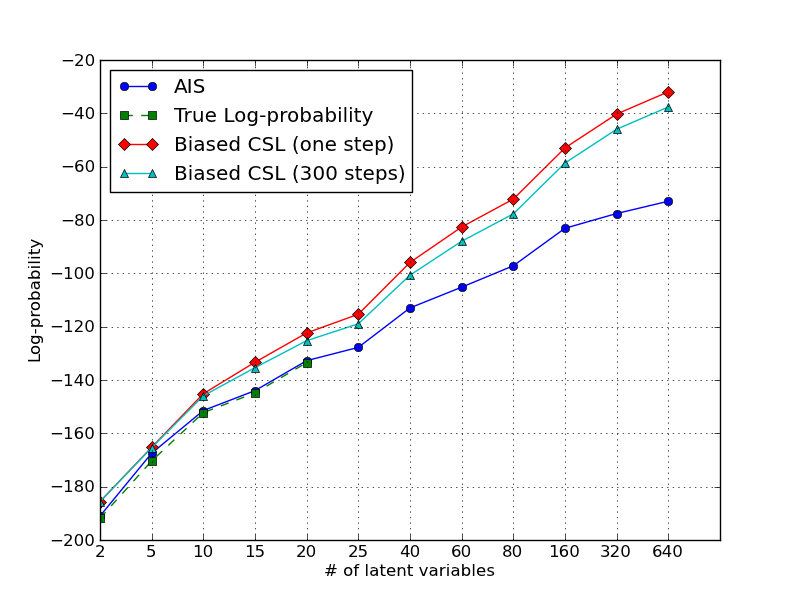}
    \label{fig:model_selection}
    \caption{The estimates of the log-probabilities of the test
        samples for 12 RBMs with varying numbers of latent
        variables. The curves represent the
        log-probabilities estimated using AIS
        (\textcolor{blue}{blue}), the biased
        CSL with a single step of 10 parallel Markov chains
        (\textcolor{red}{red}), the biased CSL
        with 300 steps of 10 parallel Markov chains
        (\textcolor{cyan}{cyan}) and
        the true log-probabilities (only for the small models,
        \textcolor{green}{green}).
    }
\end{figure}

\section{Biased CSL Estimator for Model Comparison}

Although the CSL estimator is unbiased asymptotically as shown in
Sec.~\ref{sec:asym}, it may be desirable in practice to obtain a
biased, but readily available, estimator. Hence, in this section,
we describe an algorithm, called \textit{biased CSL}, that works
with a finite number of samples. This algorithm is biased, but we
show at the end of this section, that the estimate correlate
well with the exact log-likelihood or the AIS-based estimate and
that it may be used for model comparison.

The biased CSL aims at estimating the log-probability of a
single, test sample $x$ at a time. As with the original algorithm
in Algorithm~\ref{alg:CSL}, this estimator requires only that
there are a computable conditional distribution $P(x|h)$ and a
Markov chain from where the latent variable of a model can be
sampled.

Unlike the unbiased CSL estimator, the biased CSL collected a
small set $S_x$ of \textit{consecutive} latent samples $h'$ from
a Markov chain that \textit{starts from the test sample} $x$.
This procedure ensures that the set $S_x$ will always include at
least a few samples that correspond to the neighborhood of the
test samples $x$. Furthermore, by collecting consecutive
samples, we ensure that the samples do not deviate too far away
from the starting point $x$.

Although the locality and correlatedness of the consecutive
samples $S_x$ starting from the test sample induce a bias, we
find this to be beneficial in the case of finite samples, since
the lack of any latent sample that is close to the test sample
$x$ makes the estimate highly unreliable. The biased CSL ensures
that the estimate of the probability of $x$ will be reliable and
have less variance. One consequence of the induced bias is that
the biased CSL estimator is not anymore conservative, but tends
to over-estimate the probability of the test samples.

Fig.~\ref{fig:model_selection} shows how well the biased CSL
estimates correlate with either the true log-probabilities or the
AIS-based estimates. We computed the biased CSL estimate by
running 10 parallel Gibbs sampling chains per test sample for
either a single step or 30 steps. 

It is clear from the figure that the biased CSL estimates
correlate very well with the true or AIS-based estimated
log-probabilities. As expected we see that the biased CSL
estimator tends to overestimate the log-probabilities of the test
samples. Nevertheless, we can see that the biased CSL estimator
correctly orders the model performances with only a very small
amount of samples.

This result suggests that in practice the biased CSL estimator,
which requires only a few samples per test sample, may safely be
used for the purpose of  model comparison. This is especially
useful when a model does not have an explicit probability
function, such as DAEs and GSNs.  We leave more in-depth
investigation on how the biased CSL estimator works with those
models that do not have an explicit probability function for the
future.

\section{Conclusion}

We have proposed a novel sample-based estimator for estimating the
probability that a trained model assigns to an example, called
conservative sampling-based log-likelihood (CSL) estimator. We have
justified its theoretical consistency and empirically validated
it on recently popular generative models including restricted
Boltzmann machines (RBM), deep Boltzmann machines (DBM), deep
belief networks (DBN), denoising autoencoders (DAE) and
generative stochastic networks (GSN).

The proposed CSL estimator uses only a set of samples of latent
variables generated from a model by a Markov chain. This make the
estimator useful for generative models that do not have an
explicit probability distribution but only define a generative
procedure. Also, this property of using only samples from a model
makes the estimator reflect, not only the generative performance
of the mode, but also the mixing property of the generative
procedure used to generate samples from the model. We observed
this interesting phenomenon empirically by computing the CSL
estimates on well-trained RBMs, DBNs and DBMs by generating
samples using Gibbs sampling which is known to have a poor mixing
behavior in these models.

In addition to the unbiased CSL estimator, we also proposed a
biased variant of the estimator that requires only a few
consecutive samples to approximate the probability of a single
test sample, called biased CSL estimator. The empirically
evidence suggested that the biased CSL estimator can be used to
compare models of varying complexities correctly, which makes the
CSL estimator more useful for those models without an explicit
probability function, such as GSNs, DAEs and contractive
autoencoders (CAE).

In the future, more systematic study of how the proposed CSL
estimator, both unbiased and biased, behaves with different
generative models. Especially, more empirical investigation of
applying the CSL estimator to those models without an explicit
probability distribution but only with a generative procedure
will be required.

%
%

\subsection*{Acknowledgements}
We would like to thank the developers of
Theano~\citep{bergstra+al:2010-scipy,Bastien-Theano-2012}, as well
NSERC, CIFAR, Compute Canada, and Calcul Qu\'ebec for funding.

\bibliography{strings,strings-shorter,ml,aigaion-shorter}
\bibliographystyle{natbib}

\appendix
\section{Model Descriptions}
\label{apx:models}

Here we describe the architecture and training procedure of each
model:
\\
 
{\bf GSN-1 (DAE)}: 
\begin{itemize}
    \itemsep -0.5em
    \item Architecture: 784 (input) - 2000 (tanh)
    \item Noise: (input) 0.28 salt-and-pepper, (hidden) no noise 
    \item Learning: 9-step walkback \citep{Bengio-arxiv2013},
        learning rate 0.05, cross-entropy cost, 200 epochs
    \item Early-stop: visual inspection of generated samples
\end{itemize}

{\bf GSN-2}: 
\begin{itemize}
    \itemsep -0.5em
    \item Architecture: 784 (input) - 1500 (tanh) - 1500 (tanh)
    \item Noise: (input) 0.4 salt-and-pepper, (hidden 1) no
        noise, (hidden 2) white Gaussian noise with std. 2.0
    \item Learning: learning rate 0.1, cross-entropy cost, 300 epochs
    \item Early-stop: visual inspection of generated samples
\end{itemize}

{\bf DBN-2}:
\begin{itemize}
    \itemsep -0.5em
    \item Architecture: 784 (input) - 4000 (sigmoid) - 1000 (sigmoid)
    \item Learning: (1st layer) RBM from
        \citep{NECO_cho_2013_enhanced} (2nd layer) RBM with PCD-9
\end{itemize}


{\bf DBM-2}: 
\begin{itemize}
    \itemsep -0.5em
    \item Architecture: 784 (input) - 500 (sigmoid) - 1000 (sigmoid)
    \item Learning: procedure from \citep{Salakhutdinov+Hinton-2009}
\end{itemize}

{\bf RBM}:
\begin{itemize}
    \itemsep -0.5em
    \item Architecture: 784 (input) - 4000 (sigmoid) 
    \item Learning: procedure from \citep{NECO_cho_2013_enhanced}
        (enhanced gradient, adaptive learning rate and parallel
        tempering)
\end{itemize}

\end{document}